\newtheorem{theorem}{Theorem}
\newtheorem{lemma}[theorem]{Lemma}
\title{ALIFE2022 template}
\title{Arbitrary Order Meta-Learning with Simple Population-Based Evolution}
\author{Chris Lu, Sebastian Towers \and Jakob Foerster \\
\mbox{}\\
Department of Engineering Sciences, University of Oxford, Oxford, United Kingdom \\
christopher.lu@exeter.ox.ac.uk} 
\begin{document}
\maketitle

\begin{abstract}

Meta-learning, the notion of learning to learn, enables learning systems to quickly and flexibly solve new tasks. This usually involves defining a set of outer-loop \textit{meta-parameters} that are then used to update a set of inner-loop parameters. Most meta-learning approaches use complicated and computationally expensive bi-level optimisation schemes to update these meta-parameters. Ideally, systems should perform multiple orders of meta-learning, i.e. to learn to learn to learn and so on, to accelerate their own learning. Unfortunately, standard meta-learning techniques are often inappropriate for these higher-order meta-parameters because the meta-optimisation procedure becomes too complicated or unstable. Inspired by the higher-order meta-learning we observe in real-world evolution, we show that using simple population-based evolution \textit{implicitly} optimises for arbitrarily-high order meta-parameters. First, we theoretically prove and empirically show that population-based evolution implicitly optimises meta-parameters of arbitrarily-high order in a simple setting. We then introduce a minimal \textit{self-referential} parameterisation, which in principle enables arbitrary-order meta-learning. Finally, we show that higher-order meta-learning improves performance on time series forecasting tasks. 

\end{abstract}

\section{Introduction}

The natural world contains multiple orders of meta-evolution and adaptation \citep{vanchurin2022toward}. For example, DNA has not just evolved to produce an organism, but has also evolved to be \textit{evolvable} \citep{zheng2020selection, woods2011second, metzgar2000evidence}. In other words, DNA has evolved such that random mutations in a genotype frequently result in useful or adaptive changes in the resulting organism's phenotype. Furthermore, the evolution of DNA has created organisms that have the ability to \textit{adapt} within their lifetime, one form of which is organisms that perform \textit{reinforcement learning} \citep{bateson1984genes}. These learning organisms further influence their own learning through social interactions and culture \citep{henrich2015secret, heyes2018cognitive}.

However, most existing works only investigate single-order meta-learning, for example for evolving reinforcement learning algorithms \citep{lu2022discovered}. These approaches commonly use computationally expensive bi-level optimisation schemes that quickly becomes unstable or computationally intractable when applied to higher orders of meta-learning \citep{metz2021gradients}.

Past work has empirically shown that population-based evolution implicitly selects for single order meta-learning, usually by simultaneously evolving mutation rates \citep{frans2021population, back1992self, smith1998self}. Other work has investigated multiple orders of meta-learning, but in the context of gradient-based optimisation \citep{chandra2019gradient} and multi-agent learning \citep{willi2022cola}. Finally, \citet{kirsch2022eliminating, lange2022discovering, metz2021training} empirically investigate using evolution-like algorithms on \textit{self-referential} systems to perform self-referential meta-learning, an idea first articulated in \citet{schmidhuber1987evolutionary}. However, these works  do not theoretically prove that they perform higher-order meta-learning. We connect these works by \textit{theoretically proving} and empirically showing that under some circumstances simple population-based evolution selects for \textit{arbitrarily-high} orders of meta-learning, which in principle allows for arbitrary orders of self-improvement in self-referential systems. 


\section{Numeric Fitness World}

We perform population-based evolution by selecting and mutating the top $k$ most fit individuals at each generation. Unlike past work, we do this on genomes with \textit{multiple orders} of meta-parameters. In particular, we represent a genome $x$ at generation $t$ with $n$ orders of meta-parameters as a vector of $n$ parameters, $x_t = \{x_t^0, x_t^1, \cdots, x_t^n\}$, where $x^i$ represents the $i$th-order meta-parameter. We consider the setting of ``Numeric Fitness World'' in which $\textit{fitness}(x_t) = x^0_t$
, proposed in \citet{frans2021population}.

We \textit{mutate} $x_t$ using the following update rule:
\begin{align}
    && x^i_{t+1} = x^i_{t} + x^{i+1}_t + B_t^i, 0 \leq i < n, 0<t \\
    && x^n_{t+1} = x^n_t + B^n_t, 0<t \\
    && B_t^i \sim \mathcal{N}(0,\beta), i.i.d, \forall t,i
\end{align}

In other words, we update the $i$th-order meta-parameter by adding the $(i+1)$th parameter and noise sampled from a normal distribution. We update the last meta-parameter ($x_t^n$) with just the noise. To instead create a \textit{self-referential} parameterisation, we update the last meta-parameter with \textit{itself} and the noise. 
This exact form of self-reference is likely inappropriate in most settings, but may be sensible in other parameterisations, such as neural networks \citep{irie2022modern}.

\subsection{Theoretical Results}
We prove that top-k selection selects for the fitness of higher-order meta-parameters in this setting if and only if $k>1$.  

Let $P$ define a population of individual members as defined above. Let $x$ be a specific member of $P$. Let $\bar{P}$ define a population identical to $P$ except in the $n$-th parameter of $x$. More specifically, $\bar{x}^n - x^n = \delta$, $\delta > 0$. 

Let $F(P,B,t)$ and $F^{-1}(P,B,t)$ represent the set of fitnesses of the children and non-children of $x$ respectively in population $P$ after $t$ generations of selection and vector of mutations $B$. Note that  $|F(P,B,t+1)|$ would therefore be the number of children of $x$ after generation $t$.

First, we show that top-1 (single-genome) selection does not select for higher-order meta-parameters.

\begin{theorem}
$\mathbb{E}[|F(\bar{P},B,n+1)|] = \mathbb{E}[|F(P,B,n+1)|]$ under top-1 selection for $n>1$.
\end{theorem}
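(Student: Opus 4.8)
The plan is to \emph{couple} the two evolutionary runs on $P$ and $\bar P$ through the shared mutation vector $B$ and to track how the single perturbation $\delta$ in the top-order parameter $x^n$ propagates down to the fitness coordinate $x^0$. First I would fix $B$ and, along the lineage descending from $x$, define the coordinatewise difference $d^i_t = \bar x^i_t - x^i_t$. Because the \emph{same} noise $B^i_t$ is added in both runs, it cancels in the difference and $d$ obeys the deterministic linear recursion $d^i_{t+1} = d^i_t + d^{i+1}_t$ for $0 \le i < n$ and $d^n_{t+1} = d^n_t$, with $d^n_0 = \delta$ and $d^i_0 = 0$ for $i < n$. Solving this (by induction on $t$, i.e.\ Pascal's rule) gives $d^{\,n-j}_t = \binom{t}{j}\delta$, and in particular $d^0_t = \binom{t}{n}\delta$. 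The consequence I need is that $d^0_t = 0$ for all $t < n$; in particular $d^0_0 = 0$, so the generation-$0$ fitnesses of \emph{every} individual (including $x$) are identical in $P$ and $\bar P$, the perturbation living entirely in the non-fitness coordinate $x^n$.

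Next I would exploit the degeneracy of top-$1$ selection. Under top-$1$ truncation the whole population of each generation $t \ge 1$ is produced by mutating the single fittest individual of generation $t-1$; hence every member of generation $t \ge 1$ descends from the unique generation-$0$ maximiser $w_0 := \arg\max_{y \in P} y^0$. Therefore the lineage of $x$ survives at all iff $x = w_0$: if $x = w_0$ then \emph{all} of generation $n+1$ are descendants of $x$, so $|F(P,B,n+1)|$ equals the population size; if $x \neq w_0$ then $x$ produces no offspring at generation $0$ and $|F(P,B,n+1)| = 0$. The key structural point is that this count equals the population size times $\mathbf{1}[x = w_0]$, a quantity depending only on the generation-$0$ fitnesses and not on any later selection decision nor on $B$.

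Finally I would assemble the two observations. Since $w_0$ is determined by the generation-$0$ fitnesses, which the first step showed to be identical in $P$ and $\bar P$, the indicator $\mathbf{1}[x = w_0]$ agrees in both runs, so $|F(\bar P, B, n+1)| = |F(P,B,n+1)|$ for \emph{every} realisation of $B$; taking expectations over $B$ yields the claim. The same reasoning in fact covers all $n \ge 1$ (the perturbation never reaches $x^0$ at generation $0$), and the statement is phrased for $n > 1$ only because that is the genuinely higher-order regime in which the companion top-$k$ result diverges: the raised fitness $d^0_t = \binom{t}{n}\delta > 0$ from generation $n$ onward then biases which of the \emph{coexisting} lineages pass truncation.

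I expect the main obstacle to be definitional rather than computational: making the lineage-collapse claim for top-$1$ fully rigorous requires committing to the precise reproduction model (how many offspring each selected parent generates, and whether parents persist), and handling the measure-zero event of ties when identifying $w_0$. The propagation recursion and its binomial solution are routine once the coupling is set up; the delicate conceptual step is arguing that, under top-$1$, the descendant count is insensitive to every selection decision after generation $0$ — exactly the property that fails for top-$k$ and forces the $k > 1$ threshold.
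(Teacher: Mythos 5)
Your proposal is correct and is essentially the paper's own argument made rigorous: the paper likewise observes that under top-1 selection the offspring count is all-or-nothing --- determined entirely by whether $x$ wins the first selection --- and that $x^n$ ($n>1$) cannot influence that first selection, so the counts agree pointwise in $B$ and hence in expectation. Your explicit coupling and the binomial propagation $d^0_t=\binom{t}{n}\delta$ simply formalise the paper's one-line claim that the perturbation is invisible to the first selection.

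One caveat on your side remark that the argument extends to all $n\ge 1$: the paper's parenthetical ``$n>1$'' in its proof suggests that in its convention the first selection acts on fitnesses after one mutation step, i.e.\ on $x^0_0+x^1_0+B^0_0$, which already contains $x^1$; under that convention a perturbation of $x^1$ does shift the first selection and the claim genuinely fails for $n=1$, so the restriction to $n>1$ is not merely cosmetic. This does not affect your proof of the stated theorem, since your coupling gives $d^0_t=0$ for all $t<n$, which covers the first selection whether it occurs before or after the first mutation whenever $n>1$.
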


\begin{proof}
    The number of children at generation $t>1$ is entirely determined by the first selection. Either \textit{all} members of the population at generation $n$ are children of $x$, or \textit{none} of them are. As $x^n$ ($n>1$) does not affect the first selection, it is independent to the number of children.
\end{proof}

Next, we show that top-k selection selects for higher-order meta-parameters for $k>1$.

\begin{lemma}
$|F(\bar{P},B,n+1|B=b)| \geq |F(P,B,n+1|B=b)|$ for any vector of mutations $b$.
\end{lemma}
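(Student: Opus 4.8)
The plan is to fix the mutation vector $B=b$, so that the evolution of both $P$ and $\bar P$ becomes fully deterministic, and then to track how the single perturbation $\bar x^n - x^n = \delta$ propagates through the ordered parameters. For a pair of corresponding descendants (matched by lineage) I would set $d^i_t = \bar x^i_t - x^i_t$. Because the same noise $b$ is added in both populations, it cancels in the difference, giving the noise-free recursion $d^i_{t+1} = d^i_t + d^{i+1}_t$ for $i<n$ and $d^n_{t+1} = d^n_t$, with initial condition $d^n_0 = \delta$ and $d^i_0 = 0$ for $i<n$. Solving this Pascal-type recursion ($d^{\,n-j}_t = \binom{t}{j}\delta$) yields the key fact: the perturbation reaches the fitness coordinate only at generation $n$, i.e.\ $d^0_t = 0$ for $t<n$ while $d^0_n = \delta > 0$.

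Next I would argue that for the first $n$ generations the perturbation is \emph{invisible} to selection. Since fitness equals the order-$0$ parameter and $d^0_t = 0$ for $t<n$, every descendant of $\bar x$ has exactly the same fitness as the corresponding descendant of $x$, while all non-descendants are literally identical across $P$ and $\bar P$ (same parameters, same mutations). Hence at each generation $t<n$ the two populations have identical fitness multisets, so top-$k$ selection (under a fixed tie-breaking rule) picks the matching individuals in both, and mutation preserves the lineage bijection. By induction this lockstep persists through generation $n$, so in particular $|F(\bar P,b,n)| = |F(P,b,n)|$. The clean base case is the first selection: if $x$ is not selected then neither is $\bar x$, both leave no descendants, and equality holds trivially.

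Finally, at generation $n$ the bijection still holds, but each descendant of $\bar x$ is now fitter than its counterpart by exactly $\delta>0$, while the non-descendants remain identical. It then remains to show that raising the fitness of the descendant set (holding the rest fixed) cannot decrease how many of them land in the top $k$. I would prove this monotonicity by a rank argument: for a descendant of fitness $f$, the number of entries strictly exceeding it is unchanged among the other descendants (they all shift by $\delta$) and can only drop among the fixed non-descendants, so its rank does not increase; thus every descendant selected in $P$ is still selected in $\bar P$. Since the offspring count at generation $n+1$ is weakly increasing in the number of selected parents at generation $n$, this delivers $|F(\bar P,b,n+1)| \geq |F(P,b,n+1)|$.

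The main obstacle I anticipate is making the generation-$n$ monotonicity argument airtight in the presence of ties in fitness, since top-$k$ selection is only well defined up to a tie-breaking convention; the same convention must be used in both populations and shown to be compatible with the lineage bijection. The propagation computation and the lockstep induction are routine once the noise-cancellation observation is in place.
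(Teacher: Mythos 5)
Your proof is correct and follows essentially the same route as the paper's: both argue that the two populations evolve in lockstep for $t<n$ (since the perturbation in $x^n$ cannot reach the fitness coordinate before generation $n$), that at generation $n$ every child's fitness is shifted up by exactly $\delta$ while non-children are unchanged, and that top-$k$ selection is monotone under such a shift. Your Pascal-recursion computation $d^{\,n-j}_t = \binom{t}{j}\delta$ and your explicit tie-breaking/rank argument simply make rigorous what the paper asserts tersely (its ``$\oplus\,\delta$'' distributed addition and the final ``no fewer children'' step).
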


\begin{proof}
    Note that for $t < n$, $F(\bar{P},B,t|B=b) = F(P,B,t|B=b)$, as none of the fitnesses are influenced by $x^n$, the only value in which the two populations differ.

    $F(\bar{P},B,n|B=b)   = F(P,B,n|B=b) \oplus \delta$ where $\oplus$ represents a distributed addition. 
    
    $F^{-1}(\bar{P},B,n|B=b)  = F^{-1}(P,B,n|B=b)$ because $x^n$ can not influence the selection or fitness of other members before generation $n+1$. 

    Thus, there can be no fewer children of $\bar{x}$ than children of $x$ in the top-k of the next generation .
\end{proof}

\begin{theorem}
$\mathbb{E}[|F(\bar{P},B,n+1)|] > \mathbb{E}[|F(P,B,n+1)|]$
\end{theorem}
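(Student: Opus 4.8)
The plan is to upgrade the pointwise inequality from the Lemma into a strict inequality in expectation. The Lemma already establishes that, conditioned on any fixed realization $B=b$ of the mutation vector, we have $|F(\bar{P},B,n+1\mid B=b)| \geq |F(P,B,n+1\mid B=b)|$. Taking expectations over $B$ immediately yields the weak inequality $\mathbb{E}[|F(\bar{P},B,n+1)|] \geq \mathbb{E}[|F(P,B,n+1)|]$. So the entire content of this theorem, beyond what the Lemma gives for free, is promoting $\geq$ to $>$. The standard tool here is the following fact: if $X \geq Y$ almost surely and $\mathbb{E}[X] = \mathbb{E}[Y]$, then $X = Y$ almost surely. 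Contrapositively, to get strictness it suffices to exhibit a set of mutation vectors $b$ of positive probability on which the inequality is strict, i.e. on which $\bar{x}$ has strictly more children in the top-$k$ than $x$ does.

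First I would make precise why such a strict event exists. By the Lemma's proof, at generation $n$ the children-fitnesses of $\bar{x}$ are exactly those of $x$ shifted up by $\delta>0$ (the ``$\oplus\,\delta$'' distributed addition), while the non-children fitnesses are identical across the two populations. Thus at the selection step producing generation $n+1$, the candidate pool for $\bar{P}$ dominates that for $P$: every child-candidate of $\bar{x}$ is a $+\delta$ shift of the corresponding child-candidate of $x$, and the competing non-children are unchanged. The key observation is that top-$k$ selection is monotone in fitness, so raising some candidates' fitnesses can only (weakly) increase how many of them survive into the top-$k$. To get a strict gain I would identify the ``boundary'' configuration: an event where, under $P$, some child of $x$ sits just below the $k$-th ranked individual (so it is cut), but the $+\delta$ boost in $\bar{P}$ lifts that same child above a competing non-child, so it now makes the cut. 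On such $b$, $\bar{x}$ gains at least one extra surviving child, making the inequality strict.

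The main step is therefore to argue this boundary event has strictly positive probability. Since $k>1$ and the $B_t^i$ are i.i.d.\ Gaussian with full support on $\mathbb{R}$, the fitness values at generation $n$ are continuously distributed, and for any fixed $\delta>0$ there is a positive-probability region of $b$ in which exactly one of $x$'s children has generation-$n$ fitness lying in an interval of width $\delta$ immediately below the current top-$k$ threshold while a non-child occupies that threshold slot. Because the Gaussians place positive mass on every open set of configurations, this event is not null. I would spell out that $k>1$ is what makes this possible: the competition at the cutoff is between a child of $x$ and some other genome, and only when the population retains more than one member can such a contested boundary exist (this is the precise point where Theorem~1's degeneracy for $k=1$, where the population is all-children or no-children, is avoided).

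The hard part will be handling the measure-theoretic bookkeeping cleanly: the number of children is a discrete functional of a continuous random vector $b$, and I must ensure that the favorable ``boundary'' event is genuinely an event of positive measure rather than a measure-zero coincidence (e.g.\ ties at the threshold). Gaussian continuity makes exact ties probability zero, which actually helps, but I would need to verify that the shifted and unshifted selections can be coupled on the same $b$ so that the comparison $|F(\bar P,\cdot)| $ versus $|F(P,\cdot)|$ is literally evaluated on identical randomness — otherwise the ``extra child'' argument does not translate into a statement about the same $b$. Once the coupling is fixed and the positive-probability strict event is exhibited, the conclusion follows from the $X\geq Y$ a.s.\ with $\mathbb{P}(X>Y)>0$ implying $\mathbb{E}[X]>\mathbb{E}[Y]$ principle, completing the proof.
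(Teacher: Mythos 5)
Your proposal is correct and follows essentially the same route as the paper's proof: invoke the Lemma's coupled pointwise inequality, exhibit a positive-probability set of mutation vectors $b$ on which a child of $\bar{x}$ makes the top-$k$ cut while the corresponding child of $x$ does not (the paper's version of this event is that exactly $k$ non-children fall in $[\max F(P,B,n), \max F(P,B,n)+\delta]$, giving $|F(P,B,n+1)|=0$ and $|F(\bar{P},B,n+1)|=1$), and conclude via $X \geq Y$ a.s.\ with $\mathbb{P}(X>Y)>0$ implying $\mathbb{E}[X] > \mathbb{E}[Y]$. Your treatment of the coupling and the Gaussian full-support argument is, if anything, slightly more careful than the paper's.
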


\begin{proof}
By Lemma 2,  $|F(\bar{P},B,n+1)| \geq |F(P,B,n+1)|$. 
Hence, showing $\mathbb{P}(|F(\bar{P},B,n+1)| > |F(P,B,n+1)|) > 0$ is sufficient for our result. In particular, we show $\mathbb{P}(|F(P,B,n+1)|=0 \cap |F(\bar{P},B,n+1)|=1) >0 $. There is a set of intervals over $B$ such that exactly $k$ members of $F^{-1}(P,B,n)$ lie in the range $[\max F(P,B,n), \max F(P,B,n)+\delta]$ and the rest are less than $\max F(P,B,n)$. Thus, after selection there is exactly one child of $\bar{x}$, and none of $x$.
\end{proof}

\subsection{Empirical Results}

We simulate the evolution using Jax \citep{jax2018github} and show the results in Figure \ref{fig:pbml_self}. We observed that the asymptotic growth in fitness is approximately of the order $x^n$ where $n$ is the number of meta-parameters. Furthermore, the fitness of the self-referential meta-learner grows \textit{exponentially}. Thus, in our population-based setting higher orders of meta-parameters improve fitness. In contrast, in single-genome selection, the expected value of the fitness is largely independent of the number of meta-parameters, demonstrating that single-genome selection does not perform meta-optimisation.

\begin{figure}[t]
\begin{center}
\includegraphics[width=3.2in]{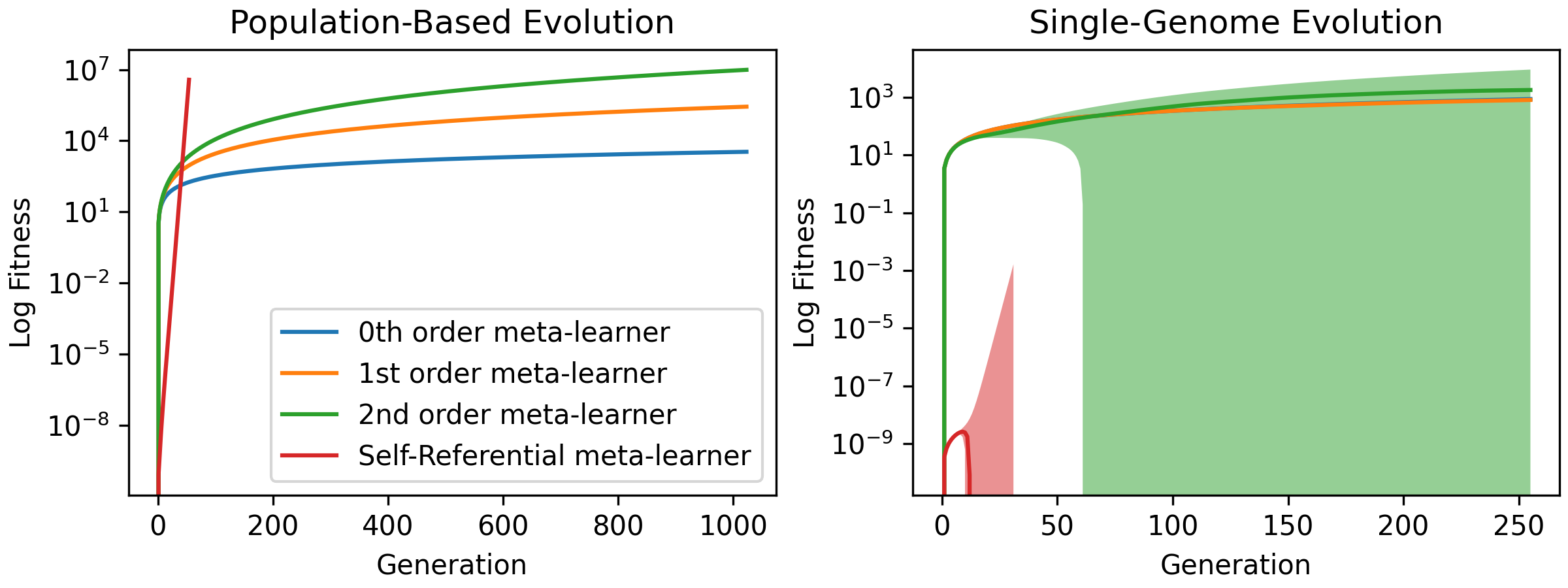}
\caption{Population-based evolution (Top-2) and single-genome evolution with varying orders of meta-learning with a population size of 2048. The shaded region refers to the standard error of the mean across 1024 seeds.}
\label{fig:pbml_self}
\end{center}
\end{figure}

\begin{table}[t]
\begin{tabular}{r|llll}
\multicolumn{1}{l|}{} & \multicolumn{4}{c}{Meta-Learning Order}                    \\
$f(t)$ & \multicolumn{1}{l|}{0th} & \multicolumn{1}{l|}{1st} & \multicolumn{1}{l|}{2nd} & 3rd \\ \hline
$t$                   & 1.0    & \textbf{3.7e-4} & 9.4e-3          & 5.0e-2        \\
$t^2$                 & 1.3e7  & 7.5             & 0.77            & \textbf{0.56} \\
$\sin(t)$             & 6.6e-2 & 1.0e-3          & \textbf{9.4e-4} & 1.2e-2        \\
$\sin(t\sin(t))$      & 2.4    & 0.67            & 0.31            & \textbf{0.16}
\end{tabular}
\caption{The average prediction error across 4096 generations of evolution with population size 16384 and top-1024 selection with 64 seeds. For each experiment we tuned $\beta \in \{1.0, 0.5, 0.1, 0.05, 0.01\}$}.
\label{table:tsf}
\vspace{-1.5em}
\end{table}

\section{Time Series Forecasting}

Next, we consider a time series forecasting task where the goal is to predict the next value of some function $f(t)$. The fitness of an individual $x_t$ is determined by $\textit{fitness}(x_t) = -|f(t/100) - x_t^0|$. We report the results on a number of functions in Table \ref{table:tsf}. Higher orders of meta-parameters improve performance in many of these settings.

\section{Future Work}

One could investigate the emergence of higher-order meta-learning in multi-agent systems \citep{lu2022model, lu2022adversarial} or artificial life \citep{langton1997artificial}. Future work would also involve theoretically analysing the long-term properties of these systems, alongside evaluating other parameterisation and selection schemes on more practical time series forecasting tasks.




\footnotesize
\bibliographystyle{apalike}
\bibliography{example} 

\end{document}